\newcommand{\RDS}{R_{\DS}}
\newcommand{\RDT}{R_{\DT}}
\newcommand{\RD}{R_{D}}
\definecolor{mydarkblue}{rgb}{0,0.08,0.45} % ICML dark blue
\def\captionof#1#2{{\def\@captype{#1}#2}}
\newcommand{\hdh}{\Hcal\!\Delta\!\Hcal}
\newcommand{\xbf}{\ensuremath{\mathbf{x}}}
\newcommand{\PS}{P_S}
\newcommand{\PT}{P_T}
\newcommand{\mt}{m_t}
\newcommand{\ms}{m_S}
\newcommand{\xbfs}{\xbf_s}
\newcommand{\xbft}{\xbf_t}
\newcommand{\ys}{y_s}
\newcommand{\DS}{D_S}
\newcommand{\DT}{D_T}
\newcommand{\RPS}{R_{\PS}}
\newcommand{\RPT}{R_{\PT}}
\newcommand{\RS}{R_{S}}
\newcommand{\Hcal}{\ensuremath{\mathcal{H}}}
\newcommand{\BQ}{B_\posterior}
\newcommand{\GQ}{G_\posterior}
\newcommand{\posterior}{\rho}
\newcommand{\prior}{\pi}
\newcommand{\sign}{\operatorname{sign}}
\newcommand{\des}{\operatorname{dis}_\posterior}
\newcommand{\R}{\mathbb{R}}
\newcommand{\I}{\mathbf{I}}
\definecolor{vertee}{rgb}{0,.5,0}
\newcommand{\eqdef}{\overset{{\mbox{\rm\tiny def}}}{=}}
\newcommand{\esp}[1]{
    \underset{#1}{\mathrm{\bf E}}\
}
\newcommand{\argmindevant}[1]{
    {\mathrm{argmin}}_{#1}\
}
\newcommand{\xb}{\mathbf{x}}
\newcommand{\KL}{{\rm KL}}
\newcommand{\ePS}{{e_{\PS}}}
 \newcommand{\ePT}{{e_{\PT}}} 
 \newcommand{\eP}{{e_{P}}}
\newtheorem{definition}{Definition}
\newtheorem{theorem}{Theorem}
\newtheorem{proposition}[theorem]{Proposition}
\title{An Improvement to the Domain Adaptation Bound\newline in a PAC-Bayesian context}
\author{
Pascal Germain\\
IFT-GLO, Universit\'e Laval\\
Qu\'ebec (QC), Canada \\
%Dept d'informatique et de g\'enie logiciel\\
%  Universit{\'e} Laval, Qu{\'e}bec, Canada\\
\texttt{\small pascal.germain@ift.ulaval.ca}
\And Amaury Habrard \\
LaHC, UMR CNRS 5516\\
Univ. of St-Etienne, France\\
%Laboratoire Hubert Curien UMR CNRS 5516\\
%Universit{\'e} Jean Monnet, 42000 St-Etienne, France
\texttt{\small amaury.habrard@univ-st-etienne.fr}
\And  Fran\c cois Laviolette \\
IFT-GLO, Universit\'e Laval\\
Qu\'ebec (QC), Canada \\
%Dept d'informatique et de g\'enie logiciel\\
%  Universit{\'e} Laval, Qu{\'e}bec, Canada\\
\texttt{\small francois.laviolette@ift.ulaval.ca}
\And Emilie Morvant \\
LaHC, UMR CNRS 5516\\
Univ. of St-Etienne, France\\
%Laboratoire Hubert Curien UMR CNRS 5516\\
%Universit{\'e} Jean Monnet, 42000 St-Etienne, France
\texttt{\small emilie.morvant@univ-st-etienne.fr}
}
\renewcommand{\PS}{P_s}
\renewcommand{\PT}{P_t}
\renewcommand{\DS}{D_s}
\renewcommand{\DT}{D_t}
\renewcommand{\ms}{m_s}
\renewcommand{\xbfs}{\xbf_i}
\renewcommand{\ys}{y_i}
\renewcommand{\xbft}{\xbf_j}
\begin{document}

\maketitle

\begin{abstract}
This paper provides a theoretical analysis of domain adaptation based
on the PAC-Bayesian theory. We propose an improvement of the previous
domain adaptation bound obtained by Germain et al. \cite{PBDA} in two
ways. We first give another generalization bound tighter and
easier to interpret. Moreover, we provide a new analysis of the
constant term appearing in the bound that can be of high interest for
developing new algorithmic solutions.
\end{abstract}

\section{Introduction}
Domain adaptation (DA) arises when the distribution generating the target
data differs from the one from which the source learning has been
generated from. Classical theoretical analyses of domain adaptation
propose some generalization bounds over the expected risk of a 
classifier belonging to a hypothesis class $\Hcal$   over the
target domain  \cite{BenDavid-NIPS06,BenDavid-MLJ2010,Mansour-COLT09}. Recently, Germain et al. have
given a generalization bound  expressed as an averaging over  the
classifiers in $\Hcal$ using the PAC-Bayesian theory \cite{PBDA}. In this
paper, we derive a new PAC-Bayesian domain adaptation bound that
improves the previous result of \cite{PBDA}. Moreover, we provide an analysis of the
constant term appearing in the bound opening the door to design 
new algorithms able to control this term. 
The paper is organized as follows. We introduce the classical
PAC-Bayesian theory in Section~\ref{s:PAC-B}. We present the domain adaptation
bound obtained in \cite{PBDA} in Section~\ref{pbda}. Section~\ref{s:new} presents our new
results.% consisting of the contribution of this paper.

\section{PAC-Bayesian Setting in Supervised Learning}
\label{s:PAC-B}
 In the non adaptive setting, the PAC-Bayesian theory \cite{McAllester99b} offers generalization bounds (and algorithms) for weighted majority votes over a set of functions, called voters.
Let $X \subseteq  \R^d$ be the input space of dimension $d$ and $Y = \{-1,+1\}$ be the output space. 
A domain $\PS$ is an unknown distribution over $X\times Y$.  
The marginal distribution of $\PS$  over $X$ is denoted by $\DS$. 
Let $\Hcal$ be a set of $n$  voters such that: $\forall h  \in  \Hcal, \ h : X \to  Y$, and let 
 $\prior$ be  a prior  on $\Hcal$.  A \emph{prior} is a probability distribution on $\Hcal$ that ``models'' some {\it a priori} knowledge on quality of the voters of $\Hcal$.
 
 Then, 
given a learning sample $S = \{(\xbfs ,\ys)\}_{i=1}^{\ms}$\,,  drawn independently and identically distributed  ({\it i.i.d.}) according to the distribution $\PS$, 
the aim of the PAC-Bayesian learner is to find a posterior distribution $\posterior$ leading to a $\posterior$-weighted majority vote $\BQ$ over $\Hcal$ that has the lowest possible expected risk, \emph{i.e.,} the lowest probability of making an error on future examples drawn from $\DS$. More precisely, the vote $\BQ$ and its true and empirical risks are defined as follows.
\begin{definition}\rm
Let $\Hcal$ be a set of  voters. Let $\posterior$ be a distribution over $\Hcal$. The $\posterior$-weighted majority vote~$\BQ$ (sometimes called the Bayes classifier) is:
\begin{align*}
\forall \xbf\in X,\quad \BQ(\xbf) \ \eqdef\ \sign\left[\esp{h\sim \posterior} h(\xbf)\right].
\end{align*}
The true risk of $\BQ$ on a domain $\PS$ and its empirical risk on a $\ms$-sample $S$ are respectively:
\begin{equation*}
\RPS(\BQ) \ 
\eqdef\esp{(\xbfs,\ys)\sim \PS}  \I\left[\BQ(\xbfs)\ne \ys\right],
 \quad \mbox{ and }\quad \RS(\BQ) 
\ \eqdef \  \frac{1}{\ms} \ \sum_{i=1}^{\ms}  \I\left[\BQ(\xbfs)\ne \ys\right].
\end{equation*}
\end{definition}
where $\I[a\neq b]$ is the 0-1 loss function returning $1$ if $a=b$ and 0 otherwise.
Usual PAC-Bayesian analyses
\cite{McAllester99b,McAllester03,Seeger02,catoni2007pac,germain2009pac} do not
directly focus on the risk of $\BQ$, but bound the risk of the closely
related stochastic Gibbs classifier $\GQ$. It predicts the label of an
example $\xbf$ by first drawing a classifier $h$ from $\Hcal$
according to $\posterior$, and then it returns $h(\xbf)$. Thus, the
true risk and the empirical on a $\ms$-sample $S$  of $\GQ$ correspond to the expectation of the risks over $\Hcal$ according to $\posterior$:
\begin{eqnarray*}
\RPS(\GQ) &\eqdef& \esp{h\sim \posterior} \RPS(h) 
\ = \esp{(\xbfs,\ys)\sim \PS} \esp{h\sim \posterior}  \I\left[h(\xbfs)\ne \ys\right], \,
\\
\mbox{ and }\, \RS(\GQ)  &\eqdef&  \esp{h\sim \posterior} \RS(h) \ = \ \frac{1}{\ms} \   \sum_{i=1}^{\ms}  \esp{h\sim\posterior} \I\left[h(\xbfs)\ne \ys\right]. \label{eq:gibbs}
\end{eqnarray*}
Note that it is well-known in the PAC-Bayesian literature that the risk of the deterministic classifier~$\BQ$ and the risk of the stochastic classifier $\GQ$ are related by $\RPS(\BQ)\leq 2\RPS(\GQ)$.
%\begin{align}
%\label{eq:relation}
%\RPS(\BQ)\leq 2\RPS(\GQ).
%\end{align}

\section{PAC-Bayesian Domain Adaptation of the Gibbs classifier}
\label{pbda}
Throughout the rest of this paper, we consider the PAC-Bayesian DA setting introduced by Germain et al. \cite{PBDA}.
The main difference between supervised learning and DA is that we have two different domains over $X\times Y$: the source domain $\PS$ and the target domain $\PT$ ($\DS$ and $\DT$ are the respective marginals over $X$).
The aim is then to learn a good model on the target domain $\PT$ knowing that we only have label information from the source domain $\PS$. 
Concretely, in the setting described in \cite{PBDA}, we have a labeled source {sample} \mbox{$S=\{(\xbfs,\ys)\}_{j=1}^{\ms}$}\,, drawn {\it i.i.d.} from $\PS$ and a target unlabeled {sample} $T=\{\xbft\}_{j=1}^{\mt}$\,, drawn {\it i.i.d.} from $\DT$.
One thus desires  to learn from $S$ and $T$ a  weighted majority vote with the lowest possible expected risk on the target domain $\RPT(\BQ)$, {\it i.e.}, with good generalization guarantees on $\PT$.
Recalling that usual PAC-Bayesian generalization bound study the risk of the Gibbs classifier, Germain et al.~\cite{PBDA} have done an analysis of its target risk $\RPT(\GQ)$, which also relies on the notion of \emph{disagreement} between the voters:
\begin{equation} \label{eq:disagreement}
R_D(h,h')\ \eqdef  \ \esp{\xbf\sim D} \I[h(\xbf)\ne h'(\xbf)]\,.
\end{equation}
Their main result is  the following theorem.
\begin{theorem}[{\small Theorem~4 of \citep{PBDA}}]
\label{thm:pbda}
Let $\Hcal$ be a set of voters. For every distribution $\posterior$ over $\Hcal$, we have:
\begin{align}\label{eq:pbda}
\RPT(\GQ)\ \leq\ \RPS(\GQ) + \des(\DS,\DT) + \lambda_{\posterior,\posterior^*_T}\,,
\end{align}
where $\des(\DS,\DT)$ is the domain disagreement between the marginals $\DS$ and $\DT$,
\begin{equation}
\label{eq:dis}
\des(\DS,\DT)\ \eqdef\ \left|\esp{(h,h')\sim\posterior^2}\!  \!    \left(  \RDS(h,h') - \RDT(h,h')  \right)  \right|,
\end{equation}
 with $\posterior^2(h,h') = \posterior(h)\! \times\! \posterior(h')$\,, 
%$\lambda_{\posterior,\posterior^*_T}$ is the best distribution on the target domain
and  
$\lambda_{\posterior,\posterior^*_T} = \RPT(G_{\posterior^*_T}) + \RDT(\GQ,G_{\posterior^*_T}) + \RDS(\GQ,G_{\posterior^*_T})$\,, 
where~$\posterior^*_T = \argmindevant{\posterior} \RPT(\GQ)$ is the best distribution on the target domain.
%, and $R_D(h,h')=\esp{\xbf\sim D} \I[h(\xbf)\ne h'(\xbf)]$ is the disagreement between two classifiers.
\end{theorem}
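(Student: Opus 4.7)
The plan is to decompose the target Gibbs risk into an expected pairwise disagreement and an expected joint error under $\posterior^{2}$, swap the target disagreement for the source disagreement via the definition of $\des$, and finally control the residual joint-error term by inserting the target-optimal posterior $\posterior^{*}_{T}$.

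First I would establish the pointwise identity
\[
\I[h(\xbf)\neq y]+\I[h'(\xbf)\neq y] \ = \ \I[h(\xbf)\neq h'(\xbf)] \, + \, 2\,\I[h(\xbf)\neq y\wedge h'(\xbf)\neq y],
\]
which follows from a case analysis on $(h(\xbf),h'(\xbf),y)\in\{-1,+1\}^{3}$. Taking expectations under any distribution $P$ (with marginal $D$) and under $\posterior^{2}$ yields
\[
R_{P}(\GQ) \ = \ e_{P}^{\posterior} + \tfrac{1}{2}\,d_{D}^{\posterior},
\]
where $d_{D}^{\posterior}\eqdef\E_{(h,h')\sim\posterior^{2}}R_{D}(h,h')$ and $e_{P}^{\posterior}\eqdef\E_{(h,h')\sim\posterior^{2}}\E_{(\xbf,y)\sim P}\I[h(\xbf)\neq y\wedge h'(\xbf)\neq y]$. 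Applying this identity to both $\PT$ and $\PS$, invoking $d_{\DT}^{\posterior}\leq d_{\DS}^{\posterior}+\des(\DS,\DT)$ from the definition of $\des$, and dropping the non-positive term $-e_{\PS}^{\posterior}$, one obtains
\[
\RPT(\GQ) \ \leq \ \RPS(\GQ) + \tfrac{1}{2}\des(\DS,\DT) + e_{\PT}^{\posterior}.
\]

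To bound $e_{\PT}^{\posterior}$ by components of $\lambda_{\posterior,\posterior^{*}_{T}}$, I would use the pointwise inequality, valid for any voter $h^{*}$ and proved by splitting on whether $h^{*}(\xbf)=y$,
\[
\I[h(\xbf)\neq y]\,\I[h'(\xbf)\neq y] \ \leq \ \I[h^{*}(\xbf)\neq y] \, + \, \I[h(\xbf)\neq h^{*}(\xbf)]\,\I[h'(\xbf)\neq h^{*}(\xbf)].
\]
Dominating the product of indicators on the right by $\tfrac{1}{2}\bigl(\I[h(\xbf)\neq h^{*}(\xbf)]+\I[h'(\xbf)\neq h^{*}(\xbf)]\bigr)$ and taking expectations over $\PT$, $\posterior^{2}$, and $h^{*}\sim\posterior^{*}_{T}$ gives $e_{\PT}^{\posterior}\leq\RPT(G_{\posterior^{*}_{T}})+\RDT(\GQ,G_{\posterior^{*}_{T}})$. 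Substituting this back, relaxing $\tfrac{1}{2}\des(\DS,\DT)\leq\des(\DS,\DT)$, and adding the non-negative term $\RDS(\GQ,G_{\posterior^{*}_{T}})$ to complete $\lambda_{\posterior,\posterior^{*}_{T}}$ finishes the proof.

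The main obstacle is identifying the correct pointwise bound on the joint-error indicator in the second step: a naive double application of the elementary triangle inequality $\I[h(\xbf)\neq y]\leq\I[h(\xbf)\neq h^{*}(\xbf)]+\I[h^{*}(\xbf)\neq y]$ to each factor of the product introduces a cross term $\I[h(\xbf)\neq h^{*}(\xbf)]\cdot\I[h^{*}(\xbf)\neq y]$ which does not collapse into any $\lambda$ summand, while the case-analysis inequality above delivers exactly $\RPT(G_{\posterior^{*}_{T}})$ and $\RDT(\GQ,G_{\posterior^{*}_{T}})$. All other steps are routine expectation manipulations; the fact that one ends with $\tfrac{1}{2}\des$ (and a spare $\RDS(\GQ,G_{\posterior^{*}_{T}})$) suggests that the stated bound is loose in these terms, hinting at where tightening is possible.
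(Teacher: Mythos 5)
Your proof is correct --- I verified the two pointwise case analyses (the identity $\I[h(\xbf)\neq y]+\I[h'(\xbf)\neq y]=\I[h(\xbf)\neq h'(\xbf)]+2\,\I[h(\xbf)\neq y]\,\I[h'(\xbf)\neq y]$ and the inequality $\I[h(\xbf)\neq y]\,\I[h'(\xbf)\neq y]\leq \I[h^*(\xbf)\neq y]+\I[h(\xbf)\neq h^*(\xbf)]\,\I[h'(\xbf)\neq h^*(\xbf)]$), as well as the final relaxations $\tfrac{1}{2}\des(\DS,\DT)\leq\des(\DS,\DT)$ and $\RDS(\GQ,G_{\posterior^*_T})\geq 0$ --- but it is not the route of \cite{PBDA}, whose Theorem~4 is quoted here without proof; the original argument inserts $G_{\posterior^*_T}$ from the outset through triangle-type inequalities on the Gibbs risk and the disagreement pseudometric, which is how all three summands of $\lambda_{\posterior,\posterior^*_T}$ arise. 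What you do instead is essentially to re-derive the present paper's Theorem~\ref{thm:pacbayesdabound}: your identity $R_P(\GQ)=\tfrac{1}{2}\RD(\GQ,\GQ)+\eP(\GQ,\GQ)$ is exactly the decomposition used in that proof, and your intermediate bound $\RPT(\GQ)\leq\RPS(\GQ)+\tfrac{1}{2}\des(\DS,\DT)+\ePT(\GQ,\GQ)$ is a sibling of Eq.~\eqref{eq:new} (you discard $-\ePS(\GQ,\GQ)$ where the paper keeps $\lambda_\posterior=|\ePT(\GQ,\GQ)-\ePS(\GQ,\GQ)|$; neither form dominates the other in general, but both retain the factor $\tfrac{1}{2}$ on the disagreement). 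The one ingredient you add beyond the paper is the conversion of the target joint error into $\RPT(G_{\posterior^*_T})+\RDT(\GQ,G_{\posterior^*_T})$ by averaging over $h^*\sim\posterior^*_T$, and your remark about why the naive factor-wise triangle inequality fails is apt. The net effect is that you prove a strictly stronger statement and then loosen it to recover Eq.~\eqref{eq:pbda}; this is sound, and it has the benefit of exhibiting precisely the slack (the spare $\tfrac{1}{2}\des(\DS,\DT)$, the added $\RDS(\GQ,G_{\posterior^*_T})$, and the detour through $\posterior^*_T$) that Theorem~\ref{thm:pacbayesdabound} eliminates.
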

Note that this bound reflects the usual philosophy in DA: It is well known  that a favorable situation for DA arrives when the divergence between the domains is small while achieving good source performance \cite{BenDavid-NIPS06,BenDavid-MLJ2010,Mansour-COLT09}. 
Germain et al.~\cite{PBDA} have then derived a first promising algorithm called PBDA for minimizing this trade-off between source risk and domain disagreement.\\
Note that Germain et al.~\cite{PBDA} also showed that, for a given hypothesis class $\Hcal$, the domain disagreement of Equation~\eqref{eq:dis} is always smaller than the $\hdh$-distance of Ben-David et al.~\cite{BenDavid-NIPS06,BenDavid-MLJ2010} defined by $\tfrac{1}{2} \sup_{(h,h')\in\Hcal^2} \left|\RDT(h,h')-\RDS(h,h')\right|$.

\section{New Results}
\label{s:new}
\subsection{Improvement of Theorem~\ref{thm:pbda}}

First, we introduce the notion of {\it expected joint error} of a pair of classifiers $(h,h')$ drawn according to the distribution $\posterior$, defined as
\begin{equation}
\label{eq:eP}
\ePS(\GQ,\GQ) \  \eqdef\,   
\esp{(h,h') \sim\posterior^2}\esp{(\xbf,y) \sim \PS} \I [ h(\xb) \neq y ] \times \I [ h'(\xb) \neq y ]\,.
%\esp{(h,h') \sim\posterior^2}\esp{(\xbf,y) \sim \PS} \zoloss\big( h(\xb), y \big) \times \zoloss\big( h'(\xb), y \big)\,.
\end{equation}

Thm~\ref{thm:pacbayesdabound} below relies on the domain disagreement of Eq.~\eqref{eq:disagreement}, and  on  \emph{expected  joint error} of Eq.~\eqref{eq:eP}.

\begin{theorem}
\label{thm:pacbayesdabound}
Let ${\cal H}$ be a hypothesis class. We have 
\begin{align} \label{eq:new}
\forall \posterior&\mbox{ on }\Hcal,\quad  \RPT(G_\posterior)\ \leq \  \RPS(G_\posterior) +  \frac{1}{2}\des(\DS,\DT) + \lambda_\posterior\,, 
\end{align}
where $\lambda_\rho$  
is the deviation between the expected joint errors of $G_\posterior$ on the target and source domains:
 \begin{eqnarray} \label{eq:lambda_rho}
 \nonumber
 \lambda_\posterior\!&\eqdef& 
% \left|\esp{(h,h')\sim\posterior^2}\!\!\left[\esp{(\xbf,y) \sim \PT}\!\!\!\! \zoloss\big( h(\xb), y \big)\zoloss\big( h'(\xb), y \big) -\!\! \esp{(\xbf,y) \sim \PS}\!\!\!\! \zoloss\big( h(\xb), y \big) \zoloss\big( h'(\xb), y \big)\right]\right|\\
% &=&
 \Big|\, \ePT(G_\posterior, G_\posterior) - \ePS(G_\posterior,G_\posterior) \,\Big|\,.
\end{eqnarray}
\end{theorem}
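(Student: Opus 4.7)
The plan is to derive the bound from a clean pointwise identity that relates the pair-expected loss, the expected joint error, and the disagreement, and then compare the source and target sides by the triangle inequality.

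The first step is to establish the key identity: for any example $(\xbf,y)\in X\times Y$ and any pair of voters $(h,h')$,
\begin{equation*}
\I[h(\xbf)\ne y] + \I[h'(\xbf)\ne y] \ =\ 2\,\I[h(\xbf)\ne y]\,\I[h'(\xbf)\ne y] + \I[h(\xbf)\ne h'(\xbf)].
\end{equation*}
This is verified by checking the four cases; it crucially uses $Y=\{-1,+1\}$, since if both $h(\xbf)$ and $h'(\xbf)$ differ from $y$ they must agree with each other. Taking the expectation of this identity over $(\xbf,y)\sim P$ for an arbitrary domain $P$ with marginal $D$, and then over $(h,h')\sim\posterior^2$, yields
\begin{equation*}
2\,R_{P}(G_\posterior) \ =\ 2\,e_{P}(G_\posterior,G_\posterior) + \esp{(h,h')\sim\posterior^2}\! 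R_{D}(h,h'),
\end{equation*}
since the left-hand side, upon averaging, gives $R_P(h)+R_{P}(h')$ which integrates to $2R_{P}(G_\posterior)$ by definition of the Gibbs risk.

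The second step is to apply this identity to both $P=\PT$ (with marginal $\DT$) and $P=\PS$ (with marginal $\DS$), and subtract:
\begin{equation*}
R_{\PT}(G_\posterior) - R_{\PS}(G_\posterior) \ =\ \bigl[e_{\PT}(G_\posterior,G_\posterior) - e_{\PS}(G_\posterior,G_\posterior)\bigr] + \tfrac{1}{2}\esp{(h,h')\sim\posterior^2}\!\bigl(R_{\DT}(h,h') - R_{\DS}(h,h')\bigr).
\end{equation*}
The third step is to bound each bracket by its absolute value via the triangle inequality. The first bracket becomes exactly $\lambda_\posterior$ by definition in \eqref{eq:lambda_rho}, and the second becomes $\tfrac{1}{2}\des(\DS,\DT)$ by definition in \eqref{eq:dis}. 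Rearranging yields the claimed inequality \eqref{eq:new}.

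I do not expect a real obstacle: the only subtle point is the case analysis in the pointwise identity, which relies on the binary output assumption $Y=\{-1,+1\}$ already in force in Section~\ref{s:PAC-B}. Everything else is linearity of expectation and the triangle inequality. The improvement over Theorem~\ref{thm:pbda} (the factor $\tfrac{1}{2}$ in front of $\des$ and the replacement of $\lambda_{\posterior,\posterior^*_T}$ by the source/target deviation $\lambda_\posterior$) arises directly from avoiding the introduction of an auxiliary posterior $\posterior^*_T$ and instead exploiting the exact decomposition of $R_P(G_\posterior)$ into a joint-error part plus a disagreement part.
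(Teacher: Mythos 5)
Your proposal is correct and follows essentially the same route as the paper: the pointwise identity $\I[h(\xbf)\ne y] + \I[h'(\xbf)\ne y] = \I[h(\xbf)\ne h'(\xbf)] + 2\,\I[h(\xbf)\ne y]\,\I[h'(\xbf)\ne y]$ is exactly the decomposition $2R_P(G_\posterior) = \RD(G_\posterior,\GQ) + 2\,\eP(G_\posterior,G_\posterior)$ used in the paper's proof, and the subsequent subtraction and triangle inequality coincide with the paper's argument. Your explicit case analysis (noting that two errors force agreement when $Y=\{-1,+1\}$) simply makes transparent the step the paper states without justification.
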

%\smallskip
\begin{proof}
First, note that for any distribution $P$ on $X\times Y$, with marginal distribution $D$ on $X$, we have
\begin{equation*} \label{eq:rde}
R_P(G_\posterior) \ = \ \frac{1}{2} \RD(G_\posterior,\GQ) + \eP(G_\posterior,G_\posterior)\,,
\end{equation*}
\begin{eqnarray*}
\mbox{as\ \ \ \ \ \ \  }2\, R_P(G_\posterior)
&=& 
\esp{(h,h') \sim\posterior^2}\esp{(\xbf,y) \sim P} 
\Big( \I [ h(\xb) \neq y ] + \I [ h'(\xb) \neq y ]  \Big) \\
&=& 
\esp{(h,h') \sim\posterior^2}\esp{(\xbf,y) \sim P} 
\Big( 1\times \I [h(\xb)\neq h'(\xb) ] + 2\times \I [h(\xb)\neq y] \, \I [h'(\xb)\neq y]\Big)\\[1.5mm]
&=& 
\RD(G_\posterior,\GQ) + 2\times \eP(G_\posterior,G_\posterior)\,.
\end{eqnarray*}
Therefore,
\begin{align*}
%\label{eq:lambda_controlled}
\nonumber \RPT(G_\posterior)-\RPS(G_\posterior)
& = 
\nonumber \frac{1}{2} \Big(\RDT(G_\posterior,\GQ)-\RDS(G_\posterior,\GQ)\Big) \!+\!\Big(\ePT(G_\posterior,G_\posterior)-\ePS(G_\posterior,G_\posterior)\Big) \\
&\leq
\nonumber \frac{1}{2} \Big|\RDT(G_\posterior,\GQ)-\RDS(G_\posterior,\GQ)\Big| +\Big|\ePT(G_\posterior, G_\posterior)-\ePS(G_\posterior, G_\posterior)\Big|  \\
&=
\frac{1}{2} \des(\DS,\DT)  + \lambda_\posterior \,.\\[-5mm] &  \qedhere
\end{align*}
\end{proof}

The improvement of Theorem~\ref{thm:pacbayesdabound} over Theorem~\ref{thm:pbda}  relies on two main points.
On the one hand, our new result contains only the half of $\des(\DS,\DT)$. 
On the other hand, contrary to $\lambda_{\posterior,\posterior_{\mbox{\tiny $T$}}^*}$ of Eq.~\eqref{eq:pbda}, the term $\lambda_{\posterior}$ of Eq.~\eqref{eq:new}
does not depend anymore on the best $\posterior_T^*$ on the target domain. This implies that our new bound is not degenerated when the two distributions $\PS$ and $\PT$ are equal (or very close). Conversely, when $\PS = \PT$, the bound of Theorem~\ref{thm:pbda} gives
\begin{align*}
\RPT(\GQ) \leq  \RPT(\GQ) + \RPT(G_{\posterior_T^*}) + 2 \RDT(\GQ,G_{\posterior_T^*})\,, 
\end{align*}
which is at least $2\RPT(G_{\posterior_T^*})$. Moreover, the term $2 \RDT(\GQ,G_{\posterior_T^*})$ is greater than zero for any  $\posterior$  when the support of  $\posterior$ and  $\posterior_T^*$ in $\Hcal$ is constituted of at least two different classifiers. 

\subsection{A New PAC-Bayesian Bound}
Note that the improvements introduced by Theorem~\ref{thm:pacbayesdabound} do not change the form and the philosophy of the PAC-Bayesian theorems previously presented by Germain et al.~\cite{PBDA}. Indeed, following the same proof technique, we obtain the following PAC-Bayesian domain adaption bound.
\begin{theorem}
 \label{theo:pacbayesdabound_catoni_bis}
 For any domains $\PS$ and $\PT$ (resp. with marginals $\DS$ and $\DT$) over $X \times  Y$, any set of hypothesis  $\Hcal$,  any prior distribution $\prior$ over $\Hcal$, any $\delta \in (0,1]$, any real numbers $\alpha > 0$ and $c > 0$,  with a probability at least $1-\delta$ over the choice of $S \times  T  \sim (\PS \times  D_T)^m $, for every posterior distribution $\posterior$ on $\Hcal$, we have
 \begin{align*} 
\RPT(G_\posterior)  
\ \leq\  c'\, \RS(G_\posterior)  +  \alpha'\, \tfrac{1}{2} \des(S,T) + 
  \left( \frac{c'}{c} + \frac{\alpha'}{\alpha} \right)  \frac{\KL(\posterior\|\prior)+\ln\frac{3}{\delta}}{m}
   + \lambda_\posterior + \tfrac{1}{2} (\alpha'\!-\! 1)
   \,,
 \end{align*}
where 
$\lambda_\rho$ is defined by Eq.~\eqref{eq:lambda_rho}, 
and where \,
$\displaystyle c'\eqdef\frac{c}{1 -e^{-c}}$, \, and \, $\displaystyle \alpha'\eqdef \frac{2\alpha}{1 -e^{-2\alpha}}$\,.
\end{theorem}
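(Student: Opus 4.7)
The plan is to use Theorem~\ref{thm:pacbayesdabound} as a deterministic starting inequality and then to replace each of the two \emph{true} quantities on its right-hand side—namely $R_{\PS}(G_\posterior)$ and $\des(\DS,\DT)$—by their empirical counterparts $R_S(G_\posterior)$ and $\des(S,T)$ via two Catoni-style PAC-Bayesian concentration inequalities. The constant $\lambda_\posterior$ depends only on the (unknown) domains and on the posterior, so it is carried through unchanged; no PAC-Bayesian step is applied to it.

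For the first replacement I would apply the standard Catoni PAC-Bayesian bound with free parameter $c>0$ to the Gibbs source risk. With probability at least $1-\delta/3$ over the draw of $S\sim\PS^{m}$, simultaneously for all posteriors $\posterior$, this yields
\begin{equation*}
  R_{\PS}(G_\posterior) \ \leq\ c'\,R_S(G_\posterior) + \frac{c'/c}{m}\bigl(\KL(\posterior\|\prior)+\ln\tfrac{3}{\delta}\bigr),
\end{equation*}
where $c'=c/(1-e^{-c})$. This is a routine invocation of Catoni's inequality with a $\delta/3$ share of the total failure probability.

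The second replacement is where the delicate bookkeeping happens. Inside $\des(\DS,\DT)$ sits an expectation under $\posterior^2$ of the \emph{signed} pairwise loss $\I[h(x_s)\!\neq\! h'(x_s)] - \I[h(x_t)\!\neq\! h'(x_t)]\in[-1,1]$. To deploy a Catoni-type bound, I would affinely rescale this loss to $[0,1]$, apply the Catoni inequality with free parameter $2\alpha$—producing the multiplicative constant $\alpha'=2\alpha/(1-e^{-2\alpha})$—and then undo the rescaling. Because the PAC-Bayes quantity lives on \emph{pairs} of hypotheses, the relevant complexity term is $\KL(\posterior^2\|\prior^2)=2\KL(\posterior\|\prior)$, and the absolute value in the definition of $\des$ forces a two-sided (hence two-event) bound. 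Splitting $2\delta/3$ between the two sides yields, with probability at least $1-2\delta/3$,
\begin{equation*}
  \tfrac12\des(\DS,\DT) \ \leq\ \tfrac{\alpha'}{2}\,\des(S,T) + \frac{\alpha'/\alpha}{m}\bigl(\KL(\posterior\|\prior)+\ln\tfrac{3}{\delta}\bigr) + \tfrac12(\alpha'-1),
\end{equation*}
in which the factor $2$ from $\KL(\posterior^2\|\prior^2)$ is precisely absorbed by the leading $\tfrac12$ imported from Theorem~\ref{thm:pacbayesdabound}, and the additive $\tfrac12(\alpha'-1)$ records the offset created by the affine rescaling of the signed loss. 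A union bound over the three failure events, combined with Theorem~\ref{thm:pacbayesdabound} (applied deterministically), then delivers the claim.

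The main obstacle is the disagreement step: getting the multiplicative factor $\alpha'$, the coefficient $\alpha'/\alpha$ on the KL term, and the additive $\tfrac12(\alpha'-1)$ to land \emph{exactly} as stated requires careful tracking of the affine rescaling, of the doubling $\KL(\posterior^2\|\prior^2)=2\KL(\posterior\|\prior)$, and of its cancellation with the $\tfrac12$ from Theorem~\ref{thm:pacbayesdabound}; the two-sidedness of $\des$ is what ultimately explains the splitting of $\delta$ into three pieces. The source-risk bound and the concluding union bound are standard.
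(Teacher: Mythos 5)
Your proposal is correct and matches the route the paper intends: the paper gives no explicit proof of this theorem, stating only that it follows ``the same proof technique'' as in \cite{PBDA}, i.e.\ combining the deterministic bound of Theorem~\ref{thm:pacbayesdabound} with a Catoni-type PAC-Bayesian bound on $\RPS(G_\posterior)$ and one on the domain disagreement over pairs of voters, then taking a union bound. Your accounting of the constants --- $c'$ and $c'/c$ from the source-risk step, $\alpha'$, $\alpha'/\alpha$ and the additive $\tfrac12(\alpha'-1)$ from the rescaled signed pair-loss with $\KL(\posterior^2\|\prior^2)=2\KL(\posterior\|\prior)$ absorbed by the factor $\tfrac12$, and the three-way split of $\delta$ explaining $\ln\tfrac{3}{\delta}$ --- reproduces the stated bound exactly.
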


\subsection{On the Estimation of the Unknown Term $\lambda_\rho$}

The next proposition gives an upper bound on the term $\lambda_\rho$ of Theorems~\ref{thm:pacbayesdabound} and~\ref{theo:pacbayesdabound_catoni_bis}.
\begin{proposition}
\label{prop:lambda}
Let $\Hcal$ be the hypothesis space. If we suppose that $\PS$ and $\PT$ share the same support, then
$$
\forall \posterior\mbox{ on } \Hcal,\ \lambda_\posterior\ \leq\  \sqrt{  \chi^2\big(\PT\|\PS\big) \, \ePS(G_\posterior,G_\posterior)}\,,
$$
where $\ePS(G_\posterior,G_\posterior)$ is the \emph{expected joint
  error} on the source distribution, as defined by Eq.~\eqref{eq:eP},
and $\chi^2\big(\PT\|\PS\big)$ is the \emph{chi-squared} divergence
between the target and the source distributions.
% which can be written in several ways:
%\begin{align*}
%\chi^2\big(\PT\|\PS\big)
%=  \esp{(\xbf,y) \sim \PS}   \left(\frac{\PT(\xbf,y)}{\PS(\xbf,y)}-1\right)^2
%= \esp{(\xbf,y) \sim \PS}   \left(\frac{\PT(\xbf,y)}{\PS(\xbf,y)}\right)^2-1 
%= \esp{(\xbf,y) \sim \PT}   \frac{\PT(\xbf,y)}{\PS(\xbf,y)}-1\,.
%\end{align*}
\end{proposition}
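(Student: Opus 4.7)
My plan is to reduce $\lambda_\rho$ to an integral against the signed measure $P_T - P_S$, apply Cauchy--Schwarz to split off the $\chi^2$ factor, and then use that the integrand takes values in $[0,1]$ to bound its second moment by its first.

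More concretely, define
\begin{equation*}
f(\xbf,y) \ \eqdef\ \esp{(h,h')\sim\posterior^2} \I[h(\xbf)\ne y]\,\I[h'(\xbf)\ne y]\,,
\end{equation*}
so that by Fubini $\ePS(\GQ,\GQ) = \esp{(\xbf,y)\sim\PS} f(\xbf,y)$ and likewise for $\PT$. Note that $f(\xbf,y)\in[0,1]$ since it is an average of products of indicators. The common-support assumption lets me perform a change of measure and write
\begin{equation*}
\ePT(\GQ,\GQ) - \ePS(\GQ,\GQ) \ =\ \esp{(\xbf,y)\sim\PS} f(\xbf,y)\left(\frac{\PT(\xbf,y)}{\PS(\xbf,y)}-1\right).
\end{equation*}

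Next I would apply the Cauchy--Schwarz inequality to the right-hand side, viewed as an inner product in $L^2(\PS)$ between $f$ and $\PT/\PS - 1$. This yields
\begin{equation*}
\left|\ePT(\GQ,\GQ) - \ePS(\GQ,\GQ)\right| \ \leq\ \sqrt{\esp{\PS}\! f^2}\ \cdot\ \sqrt{\esp{\PS}\!\left(\tfrac{\PT}{\PS}-1\right)^{\!2}},
\end{equation*}
and the second factor is exactly $\sqrt{\chi^2(\PT\|\PS)}$ by the definition of the chi-squared divergence. Finally, since $0\leq f\leq 1$ we have $f^2\leq f$ pointwise, hence $\esp{\PS} f^2 \leq \esp{\PS} f = \ePS(\GQ,\GQ)$. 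Substituting back gives the claimed inequality.

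The only delicate point is making sure the change of measure is legitimate, which is guaranteed by the shared-support hypothesis (so $\PT/\PS$ is well defined $\PS$-almost everywhere). Everything else---Fubini to pull the $\posterior^2$ expectation outside, Cauchy--Schwarz, and the $f^2\leq f$ bound---is routine, so I expect no serious obstacle beyond this bookkeeping.
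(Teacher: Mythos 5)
Your proof is correct and follows essentially the same route as the paper: a change of measure justified by the common support, Cauchy--Schwarz to extract the $\chi^2$ factor, and the observation that the (indicator-valued) integrand's second moment is bounded by its first. The only cosmetic difference is that you integrate out $(h,h')$ first and apply Cauchy--Schwarz in $L^2(\PS)$ to the averaged function $f$ (using $f^2\le f$), whereas the paper applies Cauchy--Schwarz over the product space $\posterior^2\times\PS$ and uses that the square of the indicator product equals itself; both yield the identical bound.
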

\begin{proof}
 Supposing that $\PT$ and $\PS$ have the same support, then we can
 upper bound $\lambda_\posterior$ using Cauchy-Schwarz inequality to
 obtain line 4 from line 3.
 \begin{align*} %\label{eq:lambda_rho}
 \nonumber
 \lambda_\posterior
 &=
 \left|\esp{(h,h')\sim\posterior^2}\!\!\left[\esp{(\xbf,y) \sim \PT}\!\!\!\! \I[h(\xb)\neq y] \,\I[h'(\xb)\neq y]  -\!\! \esp{(\xbf,y) \sim \PS}\!\!\!\! \I[h(\xb)\neq y]  \,\I[h'(\xb)\neq y] \right]\right|\\
 &=
 \left|\esp{(h,h')\sim\posterior^2}\!\!\left[\esp{(\xbf,y) \sim \PS}
\!\! \frac{\PT(\xbf,y)}{\PS(\xbf,y)}
  \I[h(\xb)\neq y] \,\I[h'(\xb)\neq y]  -\!\! \esp{(\xbf,y) \sim \PS}\!\!\!\! \I[h(\xb)\neq y]  \,\I[h'(\xb)\neq y] \right]\right|\\
 &=
 \left|\esp{(h,h')\sim\posterior^2} \esp{(\xbf,y) \sim \PS}
\!\! \left(\frac{\PT(\xbf,y)}{\PS(\xbf,y)}-1\right)
  \I[h(\xb)\neq y] \,\I[h'(\xb)\neq y]  \right|\\
  &\leq 
  \sqrt{
  \esp{(\xbf,y) \sim \PS}
 \!\! \left(\frac{\PT(\xbf,y)}{\PS(\xbf,y)}-1\right)^2} 
 \times
\sqrt{ \esp{(h,h')\sim\posterior^2} \esp{(\xbf,y) \sim \PS}
   \left( \I[h(\xb)\neq y] \,\I[h'(\xb)\neq y] \right)^2 } \\
    &\leq 
    \sqrt{
    \esp{(\xbf,y) \sim \PS}
   \!\! \left(\frac{\PT(\xbf,y)}{\PS(\xbf,y)}-1\right)^2 
   \times
   \esp{(h,h')\sim\posterior^2} \esp{(\xbf,y) \sim \PS}
      \I[h(\xb)\neq y] \,\I[h'(\xb)\neq y] } \\
   &=   
       \sqrt{
       \esp{(\xbf,y) \sim \PS}
      \!\! \left(\frac{\PT(\xbf,y)}{\PS(\xbf,y)}-1\right)^2 
      \times
       \ePS(G_\posterior,G_\posterior)}
%       \\   &= 
  \ = \ \sqrt{  \chi^2\big(\PT\|\PS\big) \, \ePS(G_\posterior,G_\posterior)}\,.
  \\[-6mm] & \qedhere
 \end{align*}
\end{proof}
This result indicates that $\lambda_\posterior$ can be controlled by the term
$\ePS$, which can be estimated from samples, and the chi-squared
divergence between the two distributions that we could try to estimate
in an unsupervised way or, maybe more appropriately, use as a constant
to tune, expressing a tradeoff between the two distributions. This
opens the door to derive new learning algorithms for domain adaptation
with the hope of controlling in part some negative transfer.

\bibliography{biblio}
\bibliographystyle{unsrt}

\end{document}